\documentclass{article}
\usepackage{iclr2027_conference,times}

\usepackage{amsmath,amsfonts,bm}

\def\eqref#1{equation~\ref{#1}}

\def\1{\bm{1}}

\DeclareMathAlphabet{\mathsfit}{\encodingdefault}{\sfdefault}{m}{sl}
\SetMathAlphabet{\mathsfit}{bold}{\encodingdefault}{\sfdefault}{bx}{n}

\usepackage{amssymb}
\usepackage{amsthm}
\usepackage{thmtools}

\newtheorem{lemma}{Lemma}
\usepackage{url}
\usepackage{booktabs}
\usepackage{algorithm}
\usepackage{algorithmic}
\usepackage{tikz}
\usetikzlibrary{positioning,arrows.meta}
\usepackage{pgfplots}
\pgfplotsset{compat=1.18}
\newcommand{\enzh}[2]{#1} 
\usepackage[caption=false]{subfig}
\usepackage{hyperref}

\title{SpecFit: Not All Draft Tokens Cost the Same\\
in Offloaded MoE Inference}
\renewcommand\footnotemark{}
\author{
    \!Hongliang Cao$^{1,*}$, Xiuyu Li$^{2,*}$, Siqi Wang$^1$, Ziming Wang$^3$, Jingmin Zhu$^2$, Hailong Yang$^{1,\dagger}$
    \thanks{$^*$Equal contribution.}
    \thanks{$^\dagger$Corresponding author.} \\
    $^1$Beihang University \quad $^2$Monash University \\
    $^3$The Hong Kong University of Science and Technology \\
    \footnotesize{\texttt{\{caohongliang,hailong.yang\}@buaa.edu.cn}}
}

\iclrpreprint

\begin{document}
\maketitle

\begin{abstract}
Mixture-of-experts (MoE) models activate only a few experts per token, but when
their expert weights are offloaded to host memory, host-to-device (H2D) expert
transfers enter the critical path of decoding. Speculative decoding verifies
several draft tokens in one forward pass, yet in an MoE model each verified
token activates its own experts, so a larger draft tree transfers more experts.
Because of this H2D cost, speculative decoding commits several tokens per
verification cycle yet is sometimes even slower than autoregressive
decoding. Based on this observation, we propose SpecFit, which chooses the
verified draft subtree to maximize predicted throughput rather than acceptance
length. SpecFit estimates the expected number of committed tokens from draft
path probabilities and predicts cycle time from the node count and the
per-layer union of the experts the subtree activates; this union is a monotone
submodular coverage function, so each node pays only for the experts it adds.
An exact search over ancestor-closed subtrees returns the subtree with the
highest predicted throughput; run after each draft level, the same search
also decides when to stop drafting, and the unmodified target verifies the
result. On
CNN/DailyMail summarization and a GSM8K subset with Qwen3-30B-A3B, SpecFit
decodes 21.6\% and 7.0\% faster than EAGLE-3 with the same expert cache and
1.97$\times$ and 1.72$\times$ as fast as vanilla EAGLE-3. Code is available at \url{https://github.com/Chlience/SpecFit}.
\end{abstract}

\section{Introduction}
\label{sec:intro}

Mixture-of-experts (MoE) layers route each token to a few expert feed-forward
networks, decoupling the parameters a model stores from the computation it
spends per token \citep{shazeer2017,switch2021,wang2026railwave}, and have become a central
design for recent large language models
\citep{mixtral2024,deepseekv32024,yang2025qwen3,li2026turning}. Sparse activation reduces
computation, but not memory. When GPU memory cannot hold every expert,
inference systems keep expert weights in host memory and cache, prefetch, or
compute on the CPU the experts they need
\citep{moeoffload2023,moeinfinity2024,fiddler2024}, and
host-to-device (H2D) expert transfers enter the critical path of decoding.
Speculative decoding is a natural remedy. A small drafter proposes several
tokens and the target verifies them together in one forward pass
\citep{leviathan2023,chen2023,li2026ets}, so the weights that forward reads serve every
committed token. This is especially attractive when those weights must first be
copied from host memory \citep{specexec2024}.

Existing work improves speculative decoding along two lines. Tree-based
methods draft candidate branches and choose which to verify by their
expected acceptance, ranking branches by draft confidence
\citep{li2024eagle2,li2025eagle3}, maximizing the expected acceptance length
\citep{opttree2024}, or sizing the tree against measured hardware cost
\citep{sequoia2024}. Methods for MoE models act on the
experts, prefetching the experts predicted during drafting
\citep{chen2025spmoe}, limiting the experts loaded during verification
\citep{mcdanel2026moespec}, or training a resident draft expert
\citep{han2026draftexpert}.

Tree selection, however, treats verification cost as a function of how many
tokens are verified, which holds in a dense model, where every token reads the
same weights. An MoE model routes each verified token independently, so a
draft tree activates the union of its tokens' experts, including experts needed
only by branches that will be rejected. A larger tree commits more tokens per
cycle but also transfers more experts, so much of the speedup that its
acceptance length promises is lost. Verification cost depends not only on how many draft tokens are
verified but also on which ones, and a selector that values trees by
acceptance and size cannot see this difference.

We propose SpecFit, which selects the verification subtree that maximizes
predicted throughput rather than acceptance length. It scores each
ancestor-closed subtree of an unmodified EAGLE-3 candidate pool by the ratio
of a draft-probability reward to a cycle-time estimate that grows with the
number of distinct experts the subtree activates in each layer, with
coefficients fitted once on calibration queries. An exact depth-first search
with incremental expert bitmaps then decides both which nodes and how many to
verify. Run after each draft level, the same search also decides when to stop
drafting. The target is untouched: its router is unchanged,
and every routed expert of every selected token is executed.

The design rests on four properties. The
expected committed length of a subtree is one plus the sum of its nodes'
commit probabilities, with no independence assumption; the expert-union cost
is monotone and submodular; a smaller subtree raises throughput if and
only if it saves a larger fraction of cycle time than of committed tokens;
and selecting by the per-cycle ratio stays within a computable margin of the
throughput-optimal rule. On
40 summarization articles and an 80-question GSM8K subset, SpecFit decodes
21.6\% and 7.0\% faster than EAGLE-3 with the same expert cache and
1.97$\times$ and 1.72$\times$ as fast as vanilla EAGLE-3. The gains come from
cheaper verification and drafting rather than longer accepted paths, and hold
on GSM8K, where EAGLE-3 already accepts long draft paths.

In summary, we make the following contributions:
\begin{itemize}
\item We identify and quantify a conflict between speculative decoding and
expert offloading in MoE models: verifying a draft tree enlarges the union of
activated experts, so that verification cost depends on which draft tokens
are verified, not only on how many.
\item We propose SpecFit, which selects the verification subtree and the
drafting depth by the ratio of a draft-probability reward to an expert-union cost and solves this
selection exactly, without retraining the drafter or altering the target's
routing and expert execution.
\item We characterize when a smaller subtree raises throughput, bound the
distance of per-cycle ratio selection from the throughput-optimal rule, and
prove the size-limited selection NP-hard; on summarization and GSM8K,
SpecFit improves decoding throughput over both vanilla EAGLE-3 and EAGLE-3
with an expert cache.
\end{itemize}

\section{Background}
\label{sec:background}

This section introduces the two techniques that SpecFit combines and shows
why, together, they make verification cost depend on which draft tokens are
verified.

\paragraph{MoE inference with expert offloading.}
An MoE model has $L$ MoE layers. Layer $\ell$ contains $N$ expert feed-forward networks and a router
that sends each token $x$ to the set $E_\ell(x)$ of the $k$ experts with the
highest gate scores; the layer output combines those experts' outputs with
the gate weights \citep{shazeer2017,switch2021}. When the experts do not fit in GPU memory, an
offloading runtime keeps them in host memory and uses a limited number of GPU slots per
layer as an expert cache
\citep{moeoffload2023,moeinfinity2024,promoe2024}. A forward pass over a query
$Q$ must execute, in every layer, each expert in $\bigcup_{x\in Q}E_\ell(x)$.
An expert missing from the cache is copied host-to-device before the layer
can finish and evicts another expert. The cache
avoids a transfer when a later forward needs an expert that is still in the
cache, but the per-layer union over the query remains the demand that each
forward must meet.

\paragraph{Tree-based speculative decoding.}
In EAGLE-2 and EAGLE-3 \citep{li2024eagle2,li2025eagle3}, the drafter starts
from the pending root token $s$ and expands its most likely continuations for
several steps into a candidate pool $V$. Each node $v\in V$ has a parent
$\operatorname{pa}(v)\in V\cup\{s\}$ and a path probability $p_v$, the product
of draft conditional probabilities from the root to $v$. A subset
$S\subseteq V$ is then verified: the target processes $S\cup\{s\}$ in one
forward with a tree attention mask and, under greedy verification, accepts the
longest root-to-node path whose tokens match its own predictions. The cycle
commits the root and the accepted path, and the target's next prediction
becomes the new root. EAGLE-3 verifies the nodes with the highest $p_v$ up to
a fixed budget. If a cycle commits $A$ tokens in time $T$, long-run decoding
throughput is $\mathbb{E}[A]/\mathbb{E}[T]$, so choosing $S$ trades committed
tokens against cycle time.

\paragraph{Why the verified set matters.}
Combining the two, write $E_\ell(v)$ for $E_\ell$ of the token of node $v$
in the tree forward, which depends on the node's ancestors as well as on its
own token. A forward that verifies $S\cup\{s\}$
must execute
\begin{equation}
D(S)=\sum_{\ell=1}^{L}\Bigl|E_\ell(s)\cup\bigcup_{v\in S}E_\ell(v)\Bigr|
\label{eq:union}
\end{equation}
experts. Draft tokens at different positions and on different branches are routed
separately, so the union grows with the tree, and a rejected branch pays
for its experts as fully as an accepted one.

Choosing the verified set under this cost is difficult for four reasons. First, the reward is uncertain: acceptance is revealed only by
verification, so selection must rely on estimates available before the target
runs. Second, the cost is not additive: tokens that share an expert in a layer
pay for it once, so the expert union is a coverage function and a node's
marginal cost depends on the rest of the tree. Third, the choices are
structured: a token can be committed only with its prefix, so only
ancestor-closed subsets are meaningful. Fourth, the objective is a ratio:
maximizing acceptance ignores cost, minimizing expert demand is trivially
achieved by verifying no draft token, and the best tree size varies from cycle
to cycle.

The decision also lies on the critical path of every cycle, so it must be
cheap, and it should not buy speed by pruning or substituting routed
experts, which would change the target's computation. Existing selectors value
trees by acceptance and, where they model cost, by size or depth
\citep{li2025eagle3,opttree2024,sequoia2024}; none accounts for the expert
union. Section~\ref{sec:method} formulates and solves this selection.

\section{Method}
\label{sec:method}

In each speculative cycle, SpecFit decides how deep EAGLE-3 drafts and which
draft tokens the target verifies. While the drafter builds the
candidate pool level by level, SpecFit selects the legal subtree with the
largest ratio of predicted reward to predicted cycle time and stops drafting
once that subtree takes no node from the newest level; the target then
verifies the selected subtree and commits an accepted path
(Figure~\ref{fig:pipeline}).
Section~\ref{sec:problem} defines the legal subtrees.
Section~\ref{sec:estimate} predicts a subtree's reward from draft
probabilities and its cycle time from the experts its nodes activate.
Section~\ref{sec:selection} solves the resulting objective exactly, uses it to
decide the drafting depth, relates it to the throughput-optimal rule, and
describes the route information and target execution.

\begin{figure}[t]
\centering
\definecolor{ovblue}{HTML}{0072B2}
\definecolor{ovorange}{HTML}{E69F00}
\definecolor{ovred}{HTML}{D55E00}
\definecolor{ovgreen}{HTML}{009E73}
\definecolor{ovpanel}{HTML}{F4F6F9}
\begin{tikzpicture}[
  x=1cm,y=1cm,>=Stealth,font=\sffamily\scriptsize,
  panel/.style={draw=black!25,fill=ovpanel,rounded corners=3pt},
  stepno/.style={circle,fill=black!80,text=white,font=\sffamily\scriptsize\bfseries,
    inner sep=0pt,minimum size=0.36cm},
  ptitle/.style={anchor=west,font=\sffamily\small\bfseries},
  psub/.style={anchor=west,text=black!65},
  tok/.style={circle,draw=ovblue!80,line width=0.5pt,inner sep=0pt,minimum size=0.25cm},
  unsel/.style={tok,draw=black!35,fill=white,dash pattern=on 1.2pt off 1pt},
  rootn/.style={circle,fill=black!75,text=white,inner sep=0pt,minimum size=0.3cm,
    font=\sffamily\tiny\bfseries},
  tedge/.style={draw=ovblue!70,line width=0.6pt},
  cutedge/.style={draw=black!35,line width=0.5pt,dash pattern=on 1.2pt off 1pt},
  cell/.style={draw=black!30,line width=0.3pt},
  avoid/.style={draw=ovred,line width=0.4pt,dash pattern=on 1pt off 0.8pt},
  flow/.style={->,line width=0.9pt,draw=black!70},
]
\def\ovpool{a/1/3.75/s/92/1,b/1/2.45/s/5/0,c/2/4.15/a/81/1,d/2/3.25/a/9/0,
  j/2/2.45/b/2/0,e/3/4.15/c/66/1,f/3/3.7/c/12/1,h/3/3.25/d/4/0,g/4/4.15/e/49/1,
  k/4/3.7/e/10/0,i/5/4.15/g/35/0}
\def\ovdx{0.48}

\draw[panel] (0,0.95) rectangle (3.3,5.2);
\node[stepno] at (0.3,4.88) {1};
\node[ptitle] at (0.52,4.88) {\enzh{Draft}{草拟}};
\node[psub] at (0.1,4.52) {\enzh{EAGLE-3 candidate pool $V$}{EAGLE-3 候选池 $V$}};
\begin{scope}[shift={(0.42,0)}]
  \node[rootn] (A-s) at (0,3.1) {$s$};
  \foreach \n/\d/\y/\p/\q/\s in \ovpool
    \node[tok,fill=ovblue!\q!white] (A-\n) at (\d*\ovdx,\y) {};
  \foreach \n/\d/\y/\p/\q/\s in \ovpool
    \draw[tedge] (A-\p) -- (A-\n);
\end{scope}
\shade[left color=white,right color=ovblue,draw=black!30,line width=0.3pt]
  (0.7,1.67) rectangle (2.9,1.8);
\node[anchor=east,inner sep=2pt] at (0.7,1.735) {$p_v$};
\node[anchor=north,inner sep=1.5pt] at (0.7,1.67) {0};
\node[anchor=north,inner sep=1.5pt] at (2.9,1.67) {1};

\draw[panel] (3.68,0.95) rectangle (9.55,5.2);
\node[stepno] at (3.98,4.88) {2};
\node[ptitle] at (4.2,4.88) {\enzh{Select}{选择}};
\node[psub] at (3.78,4.52) {\enzh{SpecFit: exact search over legal subtrees}%
  {SpecFit：在合法子树上精确搜索}};
\begin{scope}[shift={(4.08,0)}]
  \node[rootn] (B-s) at (0,3.1) {$s$};
  \foreach \n/\d/\y/\p/\q/\s in \ovpool {
    \ifnum\s=1 \node[tok,fill=ovblue] (B-\n) at (\d*\ovdx,\y) {};
    \else \node[unsel] (B-\n) at (\d*\ovdx,\y) {}; \fi}
  \foreach \n/\d/\y/\p/\q/\s in \ovpool {
    \ifnum\s=1 \draw[tedge] (B-\p) -- (B-\n); \else \draw[cutedge] (B-\p) -- (B-\n); \fi}
\end{scope}
\node[tok,fill=ovblue] at (5.35,2.6) {};
\node[anchor=west] at (5.47,2.6) {\enzh{selected $\widehat S$}{选中 $\widehat S$}};
\node[unsel] at (5.35,2.3) {};
\node[anchor=west] at (5.47,2.3) {\enzh{not verified}{不验证}};
\node[anchor=west] at (6.78,4.12) {\enzh{estimated experts of $\widehat S$}{$\widehat S$ 的估计专家}};
\foreach \row/\lab/\pat in {3.77/{$\ell{=}1$}/{r,r,s,s,o,x,o,s,x,o},
                            3.47/{$\ell{=}2$}/{r,s,o,r,s,x,x,o,s,o},
                            2.95/{$\ell{=}L$}/{s,r,o,s,o,r,x,s,o,x}} {
  \node[anchor=east] at (7.28,\row) {\lab};
  \foreach \t [count=\x] in \pat {
    \pgfmathsetmacro\cx{7.1+0.21*\x}
    \if\t r \fill[black!60] (\cx,\row-0.09) rectangle ++(0.18,0.18);\fi
    \if\t s \fill[ovorange] (\cx,\row-0.09) rectangle ++(0.18,0.18);\fi
    \if\t x \draw[avoid] (\cx+0.01,\row-0.08) rectangle ++(0.16,0.16);\fi
    \if\t o \draw[cell,fill=white] (\cx,\row-0.09) rectangle ++(0.18,0.18);\fi
  }}
\foreach \x in {7.05,8.25} \foreach \y in {3.12,3.21,3.3} \fill[black!70] (\x,\y) circle (0.5pt);
\fill[black!60] (6.88,2.52) rectangle ++(0.16,0.16);
\node[anchor=west] at (7.04,2.6) {\enzh{root}{根节点}};
\fill[ovorange] (\enzh{7.83}{8.02},2.52) rectangle ++(0.16,0.16);
\node[anchor=west] at (\enzh{7.99}{8.18},2.6) {\enzh{added by $\widehat S$}{$\widehat S$ 新增}};
\draw[avoid] (6.89,2.23) rectangle ++(0.14,0.14);
\node[anchor=west] at (7.04,2.3) {\enzh{saved by pruning}{剪枝后省去}};
\draw[draw=black!30,fill=white,rounded corners=2pt] (3.76,1.08) rectangle (9.47,1.98);
\node[font=\small] at (6.615,1.73)
  {$\widehat S=\operatorname*{arg\,max}_{S\in\mathcal A_K}\;\widehat R(S)\,/\,\widehat T(S)$};
\node at (6.615,1.3)
  {$\widehat R(S)=1+\sum_{v\in S}p_v,\quad\widehat T(S)=b_{|S|}+\beta F(S)$};

\draw[panel] (9.93,0.95) rectangle (13.95,5.2);
\node[stepno] at (10.23,4.88) {3};
\node[ptitle] at (10.45,4.88) {\enzh{Verify}{验证}};
\node[psub] at (10.03,4.52) {\enzh{offloaded MoE target}{卸载的 MoE 目标模型}};
\draw[draw=ovgreen!70!black,fill=white,rounded corners=2pt] (10.07,2.72) rectangle (13.81,4.3);
\node[anchor=north west,font=\sffamily\scriptsize\bfseries,text=ovgreen!60!black]
  at (10.1,4.3) {GPU};
\draw[draw=black!30,fill=ovblue!10,rounded corners=1.5pt] (10.2,2.92) rectangle (11.42,3.95);
\node[align=center] at (10.81,3.43) {\enzh{target\\forward\\on $\widehat S$}{目标模型\\前向验证\\$\widehat S$}};
\node[anchor=south] at (12.58,3.72) {\enzh{expert cache}{专家缓存}};
\foreach \t [count=\n from 0] in {h,o,l,h,h,o, l,h,o,h,l,h, h,h,o,l,h,o} {
  \pgfmathtruncatemacro\cx{mod(\n,6)}\pgfmathtruncatemacro\cy{div(\n,6)}
  \pgfmathsetmacro\px{11.74+0.28*\cx}\pgfmathsetmacro\py{3.46-0.27*\cy}
  \if\t h \fill[black!30] (\px,\py) rectangle ++(0.24,0.23);\fi
  \if\t l \fill[ovorange] (\px,\py) rectangle ++(0.24,0.23);\fi
  \if\t o \draw[cell,fill=white] (\px,\py) rectangle ++(0.24,0.23);\fi}
\draw[draw=black!45,fill=white,rounded corners=2pt] (10.07,1.08) rectangle (13.81,2.12);
\node[anchor=north west,font=\sffamily\scriptsize\bfseries,text=black!60]
  at (10.1,2.12) {\enzh{Host memory}{主机内存}};
\node[anchor=north east,text=black!65] at (13.78,2.12) {\enzh{all experts}{全部专家}};
\foreach \n in {0,...,31} {
  \pgfmathtruncatemacro\cx{mod(\n,16)}\pgfmathtruncatemacro\cy{div(\n,16)}
  \pgfmathsetmacro\px{10.23+0.215*\cx}\pgfmathsetmacro\py{1.45-0.23*\cy}
  \draw[cell,fill=black!8] (\px,\py) rectangle ++(0.19,0.19);}
\foreach \n in {3,9,20,26} {
  \pgfmathtruncatemacro\cx{mod(\n,16)}\pgfmathtruncatemacro\cy{div(\n,16)}
  \pgfmathsetmacro\px{10.23+0.215*\cx}\pgfmathsetmacro\py{1.45-0.23*\cy}
  \fill[ovorange] (\px,\py) rectangle ++(0.19,0.19);}
\draw[->,line width=1.6pt,draw=ovred] (12.58,2.14) -- (12.58,2.7);
\node[anchor=east,text=ovred,font=\sffamily\scriptsize\bfseries] at (12.5,2.42)
  {\enzh{H2D over PCIe}{H2D（PCIe）}};
\node[anchor=west,text=black!65] at (12.66,2.42) {\enzh{misses}{未命中}};

\draw[flow] (3.32,3.1) -- (3.66,3.1);
\draw[flow] (9.57,3.45) -- (9.91,3.45);
\draw[flow,rounded corners=4pt] (11.3,0.95) -- (11.3,0.5) -- (1.65,0.5) -- (1.65,0.93);
\node[fill=white,inner xsep=3pt,minimum height=0.4cm] (commit) at (6.615,0.5)
  {\hspace{0.44cm}\enzh{commit the accepted path}{提交接受路径}};
\node[stepno] at ([xshift=0.21cm]commit.west) {4};
\end{tikzpicture}
\caption{One SpecFit decoding cycle. (1)~EAGLE-3 drafts a candidate pool $V$
with path probabilities $p_v$ level by level; drafting stops once the
selection takes no node from the newest level. (2)~SpecFit searches the ancestor-closed
subtrees exactly and selects the one with the largest $\widehat R/\widehat T$;
experts needed only by unselected nodes are excluded from $F(\widehat S)$.
(3)~The offloaded target verifies $\widehat S$. Its own router decides which
experts run, and experts missing from the expert cache are copied from host
memory. (4)~The accepted path is committed.}
\label{fig:pipeline}
\end{figure}

\subsection{Selecting a Verification Subtree}
\label{sec:problem}

With the root $s$, candidate pool $V$, and parents $\operatorname{pa}(v)$ of
Section~\ref{sec:background}, a selected node requires its ancestors, which
defines the legal family of subtrees with at most $K$ draft nodes,
\begin{equation}
\mathcal A_K=\{S\subseteq V:\ |S|\leq K,\quad
 v\in S\Rightarrow\operatorname{pa}(v)\in S\cup\{s\}\}.
\label{eq:feasible}
\end{equation}
Our candidate generator uses top-2 expansion, depth at most 5, and at most
18 nodes, with $K=\min(7,|V|)$; the pool becomes smaller near the generation
limit or when SpecFit stops drafting early (Section~\ref{sec:selection}).

\subsection{Estimating Reward and Expert Cost}
\label{sec:estimate}

A useful subtree must retain likely accepted paths while limiting the work
spent verifying all its nodes. Let $I_v(S)$ indicate that draft node $v$ is
committed when $S$ is verified. By linearity of expectation, the expected
number of tokens committed per cycle is
\begin{equation}
R(S)=1+\sum_{v\in S}\Pr\bigl(I_v(S)=1\bigr)
\;\approx\;1+\sum_{v\in S}p_v=\widehat R(S),
\label{eq:reward}
\end{equation}
in which the root contributes one and each draft node its commit probability.
The first equality holds for any dependence among the commit events, with each
probability allowed to depend on the whole tree. Before verification reveals
the commit events, SpecFit estimates their probabilities with the path
probabilities $p_v$ of Section~\ref{sec:background}, which the drafter already
computes while building the pool.

Let $\widehat E_{\ell,v}$ be the estimate of $E_\ell(v)$ supplied to the
selector for node $v$ in layer $\ell$. Substituting these estimates into
$D$ of Equation~\ref{eq:union} gives the union feature $F$, and SpecFit predicts
cycle time from it and the node count,
\begin{equation}
F(S)=\sum_{\ell=1}^{L}\left|\widehat E_{\ell,s}\cup
     \bigcup_{v\in S}\widehat E_{\ell,v}\right|,
\qquad
\widehat T(S)=b_{|S|}+\beta F(S).
\label{eq:cost}
\end{equation}
The node-count intercepts $b_0,\ldots,b_7$ and the shared coefficient $\beta$
are fitted once by nonnegative least squares to cycle times measured on
GSM8K calibration questions, then frozen (Appendix~\ref{app:calibration}).
The fitted cycle
time covers drafting, target verification, and commit; the selector's own time
is measured and included in the reported throughput.

$F(S)$ estimates $D(S)$, the expert demand that the expert cache serves;
every selected branch contributes to it, whether or not it is later accepted. SpecFit with the expert cache uses
coefficients fitted with the cache active, so $\beta$ is the average time per
expert of demand with cache hits folded in; without the cache, it uses
coefficients fitted with the cache cleared before every target forward. Either
way, $\widehat T$ needs only the routes of the pool.
For fixed estimates, $F$ is a coverage function. Adding $v$ to $S$ increases
it by $\sum_\ell|\widehat E_{\ell,v}\setminus\widehat U_\ell(S)|$, where
$\widehat U_\ell(S)$ is the layer-$\ell$ union in Equation~\ref{eq:cost}; since
$\widehat U_\ell(S)$ only grows with $S$, $F$ is monotone and submodular.
Each node adds only experts new to the subtree, so its marginal expert
demand can only fall as the subtree grows.

\subsection{Selection, Verification, and Available Information}
\label{sec:selection}

SpecFit selects
\begin{equation}
\widehat S=\operatorname*{arg\,max}_{S\in\mathcal A_K}
             \frac{\widehat R(S)}{\widehat T(S)}.
\label{eq:objective}
\end{equation}
This objective jointly chooses node identities and cardinality. For positive
rewards and times, a subtree with $(R,T)$ outperforms one with $(R_0,T_0)$
exactly when
\begin{equation}
\frac{R}{T}>\frac{R_0}{T_0}
\quad\Longleftrightarrow\quad
\frac{T_0-T}{T_0}>\frac{R_0-R}{R_0}.
\label{eq:gain}
\end{equation}
Pruning nodes therefore raises throughput if and only if it saves a larger
fraction of cycle time than of committed tokens.

Equation~\ref{eq:objective} maximizes the ratio within each cycle, whereas
throughput is a ratio of expectations over cycles
(Section~\ref{sec:background}). To relate the two, let the cycle states, each
a pool with its probabilities and routes, be independent and identically
distributed and independent of earlier selections, and let
$\Theta(\pi)=\mathbb E[\widehat R(S_\pi)]/\mathbb E[\widehat T(S_\pi)]$ be
the predicted throughput of a rule $\pi$ that selects $S_\pi$ in each cycle.
The throughput-optimal rule selects
$S^\ast=\operatorname{arg\,max}_{S\in\mathcal A_K}\widehat R(S)-\lambda^\ast\widehat T(S)$
and attains the largest value $\lambda^\ast=\max_\pi\Theta(\pi)$
\citep{dinkelbach1967}. This rule needs $\lambda^\ast$, which depends on the
distribution of all cycles; Equation~\ref{eq:objective} needs only the
current pool.

\begin{restatable}{proposition}{restatethroughput}
\label{prop:throughput}
Let $\rho=\widehat R(\widehat S)/\widehat T(\widehat S)$ be the largest ratio
in a cycle, and let $\Theta$ be the predicted throughput of
Equation~\ref{eq:objective}. Then
\begin{equation}
0\le\lambda^\ast-\Theta\le
\frac{\mathbb E\bigl[(\rho-\lambda^\ast)\bigl(\widehat T(S^\ast)-\widehat T(\widehat S)\bigr)\bigr]}
     {\mathbb E\bigl[\widehat T(\widehat S)\bigr]},
\label{eq:margin}
\end{equation}
and the quantity inside the expectation is nonnegative in every cycle.
\end{restatable}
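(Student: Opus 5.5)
The plan is to combine three facts: $\lambda^\ast$ is the largest achievable predicted throughput, $S^\ast$ maximizes $\widehat R-\lambda^\ast\widehat T$ in every cycle, and $\widehat S$ maximizes $\widehat R/\widehat T$ in every cycle. Write $\widehat R,\widehat T$ for the per-cycle values at $\widehat S$ and $R^\ast,T^\ast$ for those at $S^\ast$. All of these are positive, since $\widehat R\ge 1$ and the fitted intercepts and $\beta$ make $\widehat T>0$.

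\textbf{Lower bound.} This part is immediate. The per-cycle ratio rule is one admissible rule $\pi$, so $\Theta\le\max_\pi\Theta(\pi)=\lambda^\ast$.

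\textbf{Upper bound.} Using the Dinkelbach characterization quoted before the statement, I take as given that the rule selecting $S^\ast$ attains $\lambda^\ast$, i.e.\ $\mathbb E[R^\ast]=\lambda^\ast\,\mathbb E[T^\ast]$. In every cycle $\widehat S$ maximizes the ratio, so $R^\ast\le\rho\,T^\ast$ and $\widehat R=\rho\,\widehat T$. Adding and subtracting $\lambda^\ast T^\ast$ gives
\begin{equation*}
R^\ast-\widehat R\le \rho T^\ast-\rho\widehat T=(\rho-\lambda^\ast)(T^\ast-\widehat T)+\lambda^\ast(T^\ast-\widehat T).
\end{equation*}
Taking expectations and using $\mathbb E[R^\ast]=\lambda^\ast\mathbb E[T^\ast]$ yields
\begin{equation*}
\lambda^\ast\mathbb E[T^\ast]-\mathbb E[\widehat R]\le\mathbb E[(\rho-\lambda^\ast)(T^\ast-\widehat T)]+\lambda^\ast\mathbb E[T^\ast]-\lambda^\ast\mathbb E[\widehat T].
\end{equation*}
Rearranging gives $\lambda^\ast\mathbb E[\widehat T]-\mathbb E[\widehat R]\le\mathbb E[(\rho-\lambda^\ast)(T^\ast-\widehat T)]$. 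Dividing by $\mathbb E[\widehat T]>0$ gives Equation~\ref{eq:margin}, since $\Theta=\mathbb E[\widehat R]/\mathbb E[\widehat T]$. The i.i.d.\ assumption enters only in making $\Theta$ and $\lambda^\ast$ well defined as single-cycle expectation ratios.

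\textbf{Per-cycle nonnegativity.} This is the main obstacle. I will compare optimality conditions, splitting into cases on the sign of $\rho-\lambda^\ast$.

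If $\rho\ge\lambda^\ast$, optimality of $S^\ast$ gives $R^\ast-\lambda^\ast T^\ast\ge\widehat R-\lambda^\ast\widehat T=(\rho-\lambda^\ast)\widehat T$. Ratio optimality gives $R^\ast\le\rho T^\ast$, so $R^\ast-\lambda^\ast T^\ast\le(\rho-\lambda^\ast)T^\ast$. Hence $(\rho-\lambda^\ast)(T^\ast-\widehat T)\ge0$.

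If $\rho<\lambda^\ast$, the same two inequalities give
\begin{equation*}
(\rho-\lambda^\ast)\widehat T\le R^\ast-\lambda^\ast T^\ast\le(\rho-\lambda^\ast)T^\ast.
\end{equation*}
Dividing by the negative factor $\rho-\lambda^\ast$ flips the outer inequality, so $\widehat T\ge T^\ast$. The product of two nonpositive factors is then nonnegative.

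In both cases the key move is to sandwich $R^\ast-\lambda^\ast T^\ast$ between $(\rho-\lambda^\ast)\widehat T$ and $(\rho-\lambda^\ast)T^\ast$. This sandwich is what makes the sign claim hold in every cycle rather than only in expectation.
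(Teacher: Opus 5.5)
Your proof is correct and follows the paper's argument. The upper bound uses the same facts, namely $\mathbb E[\widehat R(S^\ast)-\lambda^\ast\widehat T(S^\ast)]=0$, $\widehat R(S^\ast)\le\rho\,\widehat T(S^\ast)$ and $\widehat R(\widehat S)=\rho\,\widehat T(\widehat S)$, and your per-cycle sandwich is exactly Lemma~\ref{lem:monotone}, which combines the optimality of $S^\ast$ for $\widehat R-\lambda^\ast\widehat T$ with that of $\widehat S$ for $\widehat R-\rho\widehat T$; the case split on the sign of $\rho-\lambda^\ast$ is unnecessary, since the sandwich already gives $(\rho-\lambda^\ast)\widehat T(\widehat S)\le(\rho-\lambda^\ast)\widehat T(S^\ast)$ directly.
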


The margin vanishes when every cycle has the same largest ratio, and it is
large only when cycles differ widely in their largest ratio and the two rules
then select trees of widely different time. Appendix~\ref{app:theory} proves
the proposition and evaluates the margin on recorded pools. The same argument
shows that, for fixed intercepts, a larger $\beta$ can only lower the selected
tree's predicted expert demand per committed token,
$F(\widehat S)/\widehat R(\widehat S)$.

For general pools, the maximization in Equation~\ref{eq:objective} is hard.

\begin{restatable}{proposition}{restatehardness}
\label{prop:hardness}
Deciding whether some $S\in\mathcal A_K$ reaches a given ratio
$\widehat R(S)/\widehat T(S)$ is NP-complete, even for a pool of depth one,
one MoE layer, and equal intercepts. Without the size limit $K$ and with
intercepts affine and nondecreasing in $|S|$, the largest ratio is computable
in polynomial time by parametric minimum cut \citep{picard1976,gallo1989}.
\end{restatable}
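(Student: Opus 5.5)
The plan has two parts, matching the two claims. Membership in NP is immediate: a certificate is $S$ itself. Checking $|S|\le K$, ancestor-closure, and the rational inequality $\widehat R(S)\ge r\,\widehat T(S)$ takes polynomial time, since $F(S)$ is a union of given sets. For hardness, the key observation is that in a pool of depth one every node is a child of $s$. So $\mathcal A_K$ is just all subsets of $V$ of size at most $K$, and the problem becomes a ratio version of \emph{minimum $K$-union}. I would reduce from \textsc{Clique}.

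Given a graph $G=(X,Y)$ with vertex set $X$, edge set $Y$, and target clique size $t$, build the following instance. Use one MoE layer with routing width $k=2$. Create one draft node $v_e$ per edge $e=\{x,y\}$, with $\widehat E_{1,v_e}=\{x,y\}$. Give the root two fresh dummy experts. Set every $p_v$ equal to $M$, all intercepts $b_j$ equal to $0$ (or any common constant), $\beta=1$, and $K=\binom t2$. A set $S$ of $j$ edges touching $u$ vertices then has
\[
\frac{1+jM}{2+u},\qquad j\le\min\Bigl(K,\tbinom u2\Bigr),
\]
and the threshold is $r=(1+KM)/(2+t)$.

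First consider the idealized ratio $j/(2+u)$, i.e.\ drop the $+1$. Suppose $j/(2+u)\ge K/(2+t)$. Since $j\le K$, this forces $u\le t$. Because $\binom u2/(u+2)$ is strictly increasing in $u$, the inequality then forces $u=t$ and $j=K$, which means the $K$ chosen edges span exactly $t$ vertices: a $t$-clique. Conversely, a $t$-clique attains exactly $K/(2+t)$.

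The step I expect to be the main obstacle is absorbing the additive $1$ in $\widehat R$. All competing ratios $j/(2+u)$ are rationals whose denominators are at most $(|X|+2)^2$. So any ratio below $K/(2+t)$ is below it by at least $1/\mathrm{poly}(|X|)$. Choosing $M$ polynomially large, say $M=(|X|+2)^3$, keeps the perturbation $1/((2+u)M)$ below this gap, so the reduction is preserved and all numbers stay polynomial in size. If nonzero equal intercepts are preferred, the same gap argument absorbs a constant $b$.

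For the polynomial part, let $\widehat T(S)=a+c|S|+\beta F(S)$ with $a,c,\beta\ge0$ and no size limit. By Dinkelbach's lemma, some legal $S$ has ratio at least $\lambda$ exactly when
\[
\max_{S}\;\widehat R(S)-\lambda\widehat T(S)\ \ge\ 0 .
\]
For fixed $\lambda$, the left side is a maximum-weight closure problem on the following digraph:
\begin{itemize}
\item each node $v$ has weight $p_v-\lambda c$;
\item each expert pair $(\ell,e)$ outside the root's set has weight $-\lambda\beta$;
\item arcs run $v\to\operatorname{pa}(v)$ and $v\to(\ell,e)$ for every $e\in\widehat E_{\ell,v}$;
\item the constant is $1-\lambda a-\lambda\beta\sum_\ell|\widehat E_{\ell,s}|$.
\end{itemize}
Picard's construction solves this as a minimum $s$–$t$ cut. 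As $\lambda$ increases, source capacities are nonincreasing and sink capacities nondecreasing. This is exactly the monotone setting of Gallo--Grigoriadis--Tarjan, so all breakpoints, and hence the $\lambda$ where the optimum crosses zero (the largest ratio), are found within one parametric max-flow computation. Alternatively, Newton--Dinkelbach iterations, each a single min cut, terminate in polynomially many steps.

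Finally, I would remark why both relaxations are needed. The constraint $|S|\le K$ breaks the closure structure, which is exactly what the clique reduction exploits. Affine intercepts keep the node terms modular, which the closure formulation requires.
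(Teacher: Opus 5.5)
Your NP-membership argument and your polynomial-time part are sound and essentially match the paper. Both use Picard's closure network on node and expert-pair vertices, then one Gallo--Grigoriadis--Tarjan parametric flow. The paper uses a constant source arc $p_v$ plus a separate sink arc $\lambda a$ per node, which keeps every capacity linear, instead of your $\max(p_v-\lambda c,0)$; that difference is cosmetic, up to the arc reversal the paper mentions.

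The gap is in the hardness reduction. You set every $p_v=M=(|X|+2)^3$, but $p_v$ is a path probability. In a depth-one pool the $p_v$ are the drafter's conditional probabilities of distinct children of the root, so they lie in $[0,1]$ and sum to at most $1$. Your instance is therefore not a pool. Hardness for arbitrary positive node weights does not give hardness for the restricted problem the proposition is about. This cannot be patched by rescaling, because your mechanism is to make $\sum_{v\in S}p_v$ swamp the additive $1$ in $\widehat R$. With legal probabilities, $\widehat R(S)\in[1,2]$ for every $S$. With $b_j=0$, the root-only subtree has ratio $1/(2\beta)$, while a $t$-clique has ratio at most $2/\bigl(\beta(2+t)\bigr)\le 1/(2\beta)$ for $t\ge2$. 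So any threshold a clique reaches is also reached by $S=\emptyset$, and your reduction answers ``yes'' on every graph.

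The missing idea is to let a positive common intercept, not the reward, enforce the cardinality. Keep your edge/vertex gadget, but choose:
\begin{itemize}
\item $p_v=p=1/(|Y|+1)$, a legal distribution over the root's children;
\item $b_j=b>0$ for all $j$, and $K=\binom{t}{2}$;
\item $\beta>0$ so small that $\beta t\,(1+(K-1)p)<bp$.
\end{itemize}
With the root routed to no expert, as in the paper, take the threshold $\theta=(1+Kp)/(b+\beta t)$. If you keep your two dummy root experts, use $(1+Kp)/(b+\beta(2+t))$; the same condition on $\beta$ suffices.

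Every $S$ with $|S|<K$ then has ratio at most $(1+(K-1)p)/b<\theta$, so a subtree reaching $\theta$ has exactly $K$ nodes. For such subtrees the ratio $(1+Kp)/(b+\beta u)$ is decreasing in the number $u$ of covered vertices, which forces $u\le t$. Since $\binom{t}{2}$ edges on at most $t$ vertices form a $t$-clique, the reduction is correct. This is the paper's argument, and it makes both your monotonicity of $\binom{u}{2}/(u+2)$ and your rational-gap perturbation bound unnecessary.
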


SpecFit's pools have at most 18 nodes and $K\le7$, so the legal family is
small, and SpecFit solves Equation~\ref{eq:objective} exactly by enumeration
(Algorithm~\ref{alg:selection} in Appendix~\ref{app:theory}). A depth-first
traversal reaches every subtree in $\mathcal A_K$ exactly once and carries the
per-layer expert unions as bitmaps from parent states. Equal ratios prefer
larger reward, smaller union, fewer draft nodes, and then lexicographically
smaller node IDs, fixing a unique choice among the maximizers of
Equation~\ref{eq:objective} over $\mathcal A_K$.

\paragraph{Adaptive drafting depth.}
The same objective also decides how deep EAGLE-3 drafts. The drafter adds one
level of the pool per draft forward; let $V_{\le d}$ be the nodes of the first
$d$ levels. After drafting level $d$, SpecFit solves
Equation~\ref{eq:objective} with $V_{\le d}$ in place of $V$. If the selected
subtree contains a level-$d$ node, SpecFit drafts the next level; otherwise it
stops drafting and verifies that subtree. At the maximum depth it selects over
the full pool. The stop rests on the tree structure: a deeper node can be
selected only together with its level-$d$ ancestor and adds no more reward
than that ancestor, which the selection has already left out. The stopped
selection equals the full-pool selection whenever the latter stays within the
first $d$ levels (Lemma~\ref{lem:depth} in Appendix~\ref{app:theory}).

\paragraph{Route information.}
In our experiments, $\widehat E$ comes from a route atlas, a separate target
query over the full candidate pool on SpecFit's reference prefix, prepared
outside the request timer. During measurement, each cycle's prefix and pool
are checked against the atlas. Selection uses only the draft probabilities,
the atlas routes, and the frozen coefficients.

\paragraph{Target execution.}
After selection, the target verifies $\widehat S$ with its own router and
executes every routed expert of every selected token; experts missing from the
expert cache are copied from host memory.

\section{Experiments}
\label{sec:experiments}

We compare SpecFit with EAGLE-3 and autoregressive (AR) decoding, each without
and with the
expert cache, on summarization and GSM8K, decompose the gain with verification
and transfer counters, vary the cache capacity and the GPU, and account for
route prediction and selection cost.

\subsection{Setup}
\label{sec:setup}

The target is Qwen3-30B-A3B-Instruct-2507 \citep{yang2025qwen3,qwen2025model}
with its RedHatAI EAGLE-3 drafter \citep{redhat2025drafter}, and the workloads
are 40 CNN/DailyMail articles \citep{nallapati2016} and 80 GSM8K questions
\citep{cobbe2021} from Spec-Bench \citep{xia2024}. All measurements run at
request batch size one, on a single A100 PCIe 40GB GPU except in
Section~\ref{sec:device}. Every configuration, including AR, runs on our
offload backend, which keeps all expert weights in pinned host memory. With
the expert cache, each layer holds 48 experts in GPU slots with LRU
replacement; without it, the cache is cleared before every target forward.
Table~\ref{tab:methods} lists the compared methods; SpecFit and EAGLE-3 use
the same drafter and differ only in drafting depth and in which and how many
nodes are verified.
EAGLE-3 without the cache is the vanilla EAGLE-3 of the introduction; the
cached pair is our primary comparison, and SpecFit without qualification
denotes its cached configuration. The cost coefficients are fitted on GSM8K
calibration questions, with and without the cache, and used unchanged on both
workloads. Speedups are relative to AR without the expert cache, and committed
tokens are the tokens a verification cycle appends to the output, namely the
root and the accepted draft nodes (Section~\ref{sec:background}).
Appendix~\ref{app:calibration} gives the backend, timing, calibration, and data
details.

\begin{table}[htbp]
\centering
\caption{Compared methods.}
\label{tab:methods}
\begin{tabular}{ll}
\toprule
Method & Verification-node selection \\
\midrule
AR & Serial target decoding, without a drafter \\
EAGLE-3 & Highest cumulative probability, at most 7 nodes \\
SpecFit & Equation~\ref{eq:objective} with adaptive depth, at most 7 nodes \\
\bottomrule
\end{tabular}
\end{table}

\subsection{Summarization}
\label{sec:summarization}

To test whether SpecFit gains by leaving out draft nodes that do not pay for
their experts, we first use summarization, on which EAGLE-3 accepts short
draft paths. Each method runs on the 40 CNN/DailyMail articles without and
with the expert cache; Figure~\ref{fig:summ} reports decode throughput.

\begin{figure}[htbp]
\centering
\input{fig-bars.tex}
\begin{tikzpicture}
\begin{axis}[tpsbar, width=0.52\linewidth, xmin=-0.5, xmax=1.5, xtick={0,1},
  xticklabels={{\enzh{With expert cache}{使用专家缓存}}, {\enzh{Without expert cache}{不使用专家缓存}}},
  ymax=10, ytick={0,2,4,6,8,10}]
\addplot[specfit] coordinates {(0,8.27) (1,5.54)};
\addplot[eagle] coordinates {(0,6.80) (1,4.21)};
\addplot[arbar] coordinates {(0,6.64) (1,4.67)};
\legend{SpecFit, EAGLE-3, AR}
\end{axis}
\end{tikzpicture}
\caption{Results on the CNN/DailyMail summarization dataset.}
\label{fig:summ}
\end{figure}

SpecFit decodes 8.27 tokens/s, 21.6\% above cached EAGLE-3 (19.9\% including
prefill) and 1.77$\times$ AR. Without the cache, it decodes 31.6\% faster than
vanilla EAGLE-3, which is itself slower than AR (0.90$\times$). The
verification and transfer counters in Table~\ref{tab:summ-mechanism} show
where the gain comes from. EAGLE-3 verifies seven draft nodes per cycle but
accepts only 1.54 of them. SpecFit verifies 2.88 draft nodes instead. It
stops drafting before the fifth level in 85.8\% of cycles, which cuts
drafting time per cycle by 44.9\%. It commits 13.1\% fewer tokens per cycle
than cached EAGLE-3, but each cycle takes 28.5\% less decode time, so it
decodes 21.6\% faster. Verification accounts for 78.3\% of the time saved
per cycle, and SpecFit transfers 22.1\% fewer experts per output token. This
is the trade the objective makes: a node with a low acceptance probability
adds little to the expected commit but still adds its experts to the
verified union, so leaving such nodes out shortens the cycle by a larger
fraction than it reduces committed tokens.

\begin{table}[htbp]
\centering
\small
\caption{Verification and transfer counters on the CNN/DailyMail summarization dataset.}
\label{tab:summ-mechanism}
\setlength{\tabcolsep}{4pt}
\begin{tabular}{lcccccc}
\toprule
Method & Cache & Draft nodes & Committed & Cycles & ms/cycle & H2D/token \\
\midrule
AR & No & -- & $1.00$ & 9,652 & 214.2 & 400.7 \\
EAGLE-3 & No & $7.00$ & $2.54$ & 3,823 & 600.9 & 486.5 \\
SpecFit & No & $2.29$ & $2.08$ & 4,702 & 373.6 & 347.7 \\
\midrule
AR & Yes & -- & $1.00$ & 9,652 & 150.6 & 49.7 \\
EAGLE-3 & Yes & $7.00$ & $2.54$ & 3,823 & 371.7 & 73.4 \\
SpecFit & Yes & $2.88$ & $2.21$ & 4,389 & 265.8 & 57.2 \\
\bottomrule
\end{tabular}
\end{table}

\subsection{GSM8K}
\label{sec:gsm8k}

To test whether SpecFit still gains when EAGLE-3 accepts long draft paths,
we run the same comparison on the 80 GSM8K questions;
Figure~\ref{fig:gsm8k} reports decode throughput.

\begin{figure}[htbp]
\centering
\input{fig-bars.tex}
\begin{tikzpicture}
\begin{axis}[tpsbar, width=0.52\linewidth, xmin=-0.5, xmax=1.5, xtick={0,1},
  xticklabels={{\enzh{With expert cache}{使用专家缓存}}, {\enzh{Without expert cache}{不使用专家缓存}}},
  ymax=13, ytick={0,2,4,6,8,10,12}]
\addplot[specfit] coordinates {(0,10.79) (1,7.08)};
\addplot[eagle] coordinates {(0,10.08) (1,6.26)};
\addplot[arbar] coordinates {(0,6.53) (1,4.70)};
\legend{SpecFit, EAGLE-3, AR}
\end{axis}
\end{tikzpicture}
\caption{Results on the GSM8K dataset.}
\label{fig:gsm8k}
\end{figure}

SpecFit decodes 10.79 tokens/s, 7.0\% above cached EAGLE-3 (6.3\% including
prefill) and 2.29$\times$ AR; without the cache, it decodes 13.1\% faster than
vanilla EAGLE-3. The verification and transfer counters in
Table~\ref{tab:gsm8k-mechanism} show where this gain comes from. With the
cache, EAGLE-3 commits 4.35 tokens per cycle,
against 2.54 on summarization, so fewer of its nodes are worth leaving out.
SpecFit still verifies 5.09 draft nodes instead of 7.00 and commits 7.8\%
fewer tokens per cycle, but each cycle takes 13.8\% less decode time, and it
transfers 11.6\% fewer experts per output token; adaptive depth stops
drafting before the fifth level in 33.7\% of cycles and cuts drafting time
per cycle by 11.7\%. The gain is smaller than on summarization but has the
same source, shorter cycles rather than longer accepted paths: cycle time
falls by a larger fraction than committed tokens, as Equation~\ref{eq:gain}
requires.

\begin{table}[htbp]
\centering
\small
\caption{Verification and transfer counters on the GSM8K dataset.}
\label{tab:gsm8k-mechanism}
\setlength{\tabcolsep}{4pt}
\begin{tabular}{lcccccc}
\toprule
Method & Cache & Draft nodes & Committed & Cycles & ms/cycle & H2D/token \\
\midrule
AR & No & -- & $1.00$ & 17,791 & 212.5 & 398.2 \\
EAGLE-3 & No & $7.00$ & $4.35$ & 4,097 & 692.3 & 345.7 \\
SpecFit & No & $4.27$ & $3.81$ & 4,662 & 536.7 & 294.1 \\
\midrule
AR & Yes & -- & $1.00$ & 17,791 & 153.1 & 76.5 \\
EAGLE-3 & Yes & $7.00$ & $4.35$ & 4,097 & 429.8 & 92.7 \\
SpecFit & Yes & $5.09$ & $4.01$ & 4,407 & 370.4 & 82.0 \\
\bottomrule
\end{tabular}
\end{table}

\subsection{Sensitivity to Offload Pressure}
\label{sec:pressure}

To test whether SpecFit's gain grows as the expert cache shrinks, we rerun
EAGLE-3 and SpecFit on the 40 summarization articles with 24 and 12 expert
slots per layer and compare them with the 48 slots of the main runs and with
the configurations without the cache (0 slots); Figure~\ref{fig:slots}
reports decode throughput.

\begin{figure}[htbp]
\centering
\subfloat[Decode throughput.]{
\input{fig-bars.tex}
\begin{tikzpicture}
\begin{axis}[tpsbar, bar width=12pt, width=0.32\textwidth, height=3.95cm, xmin=-0.5, xmax=3.5,
  xtick={0,1,2,3}, xticklabels={48,24,12,0},
  xlabel={\enzh{Expert slots per layer}{每层专家槽位数}}, xlabel style={yshift=2pt},
  ymax=10, ytick={0,2,4,6,8,10}]
\addplot[specfit] coordinates {(0,8.27) (1,7.68) (2,6.92) (3,5.54)};
\addplot[eagle] coordinates {(0,6.80) (1,6.18) (2,5.36) (3,4.21)};
\legend{SpecFit, EAGLE-3}
\end{axis}
\end{tikzpicture}\label{fig:slots}}\hfill
\subfloat[Cycle time and transfers.]{
\input{fig-bars.tex}
\begin{tikzpicture}
\pgfplotsset{slotcost/.style={tpsbar, bar width=17pt, width=0.46\textwidth, height=1.8cm,
  xmin=-0.5, xmax=3.5, xtick={0,1,2,3},
  every node near coord/.append style={/pgf/number format/precision=1}}}
\begin{axis}[slotcost, name=cycle, xticklabels={},
  ylabel={\enzh{ms/cycle}{毫秒/周期}}, ymax=800, ytick={0,200,400,600,800}]
\addplot[specfit] coordinates {(0,265.8) (1,286.3) (2,317.5) (3,373.6)};
\addplot[eagle] coordinates {(0,371.7) (1,409.3) (2,471.8) (3,600.9)};
\legend{SpecFit, EAGLE-3}
\end{axis}
\begin{axis}[slotcost, at={(cycle.south west)}, anchor=north west, yshift=-10pt,
  xticklabels={48,24,12,0},
  xlabel={\enzh{Expert slots per layer}{每层专家槽位数}}, xlabel style={yshift=2pt},
  ylabel={H2D/token}, ymax=600, ytick={0,200,400,600}]
\addplot[specfit] coordinates {(0,57.2) (1,140.4) (2,227.4) (3,347.7)};
\addplot[eagle] coordinates {(0,73.4) (1,200.3) (2,340.0) (3,486.5)};
\end{axis}
\end{tikzpicture}\label{fig:slots-mechanism}}
\caption{Results on CNN/DailyMail with different numbers of expert slots per
layer.}
\end{figure}

SpecFit's decoding gain over EAGLE-3 grows from 21.6\% at 48 slots to
24.3\% and 29.2\% at 24 and 12 slots, and it is 31.6\% without the cache. With
the cache, SpecFit verifies 2.88 draft nodes and commits 13.1\% fewer tokens
per cycle than EAGLE-3 at every capacity, so the growing gain comes from
shorter cycles. As Figure~\ref{fig:slots-mechanism} shows, at 48, 24, and 12
slots, its time per cycle is 28.5\%, 30.1\%, and 32.7\% shorter, and it
transfers 22.1\%, 29.9\%, and 33.1\% fewer experts per output token.

Because the cache capacity does not enter selection, the trees at 48, 24,
and 12 slots are the same; for these capacities, let a configuration spend
decode time $T=B+\tau H$ on $R$ decode tokens, a fixed part $B$ plus $H$ expert
transfers of time $\tau$ each. For SpecFit and EAGLE-3 (subscripts S and E),
\begin{equation}
\frac{R_S/T_S}{R_E/T_E}=(1-w)\,\frac{R_S/B_S}{R_E/B_E}+w\,\frac{R_S/H_S}{R_E/H_E},
\qquad w=\frac{\tau H_S}{T_S},
\label{eq:mediant}
\end{equation}
so the gain averages SpecFit's advantages in tokens per unit of fixed time and
per decode-stage transfer, weighted by its transfer-time share $w$. If $B$ and
$\tau$ do not depend on capacity, a smaller cache raises $w$, increasing the
weight of the second term. The full-request counters, including prefill, show
token-per-transfer advantages of 28.3\%, 42.7\%, and 49.5\% at 48, 24, and
12 slots (Figure~\ref{fig:slots-mechanism}), above each measured decoding gain.

\subsection{Another GPU}
\label{sec:device}

To test whether the gain carries over to a GPU with less memory and a slower
host link, we repeat the GSM8K comparison on a single RTX 4090 24GB. It holds
32 expert slots per layer instead of 48, and with the cache cleared, the
fitted time per loaded expert is 0.78\,ms, against 0.41\,ms on the A100. The
cost coefficients are refitted on this GPU with the same protocol
(Appendix~\ref{app:calibration}), and the comparison uses the 40
odd-numbered of the 80 questions; Table~\ref{tab:device} reports the
results.

\begin{table}[htbp]
\centering
\small
\caption{Results on the GSM8K dataset on a single RTX 4090 24GB.}
\label{tab:device}
\setlength{\tabcolsep}{4pt}
\begin{tabular}{lcccccccc}
\toprule
Method & Cache & Decode TPS & Speedup & Draft nodes & Committed & Cycles & ms/cycle & H2D/token \\
\midrule
AR & No & $2.67$ & $1.00\times$ & -- & $1.00$ & 8,714 & 374.4 & 398.6 \\
EAGLE-3 & No & $3.29$ & $1.23\times$ & $6.99$ & $4.32$ & 2,009 & 1,306.7 & 346.4 \\
SpecFit & No & $3.89$ & $1.46\times$ & $3.78$ & $3.68$ & 2,339 & 942.8 & 288.0 \\
\midrule
AR & Yes & $4.24$ & $1.59\times$ & -- & $1.00$ & 8,714 & 236.1 & 126.9 \\
EAGLE-3 & Yes & $6.05$ & $2.27\times$ & $6.99$ & $4.32$ & 2,009 & 711.1 & 150.0 \\
SpecFit & Yes & $6.70$ & $2.51\times$ & $4.53$ & $4.00$ & 2,162 & 593.7 & 128.3 \\
\bottomrule
\end{tabular}
\end{table}

With the expert cache, SpecFit decodes 6.70 tokens/s on this GPU, 10.8\%
above cached EAGLE-3 and 2.51$\times$ AR; without the cache, it decodes
18.1\% faster than vanilla EAGLE-3. As on the A100, it verifies fewer
draft nodes (4.53 instead of 6.99) and commits 7.5\% fewer tokens per cycle,
but each cycle takes 16.5\% less decode time, and it transfers 14.5\% fewer
experts per output token. SpecFit thus makes the same trade on a GPU whose
memory and transfer costs differ.

\subsection{Route Prediction and Selection Cost}
\label{sec:overhead}

Our measurements take routes from the atlas, outside the request timer, so that
selection is evaluated on its own; a deployment replaces the atlas with a route
predictor. Lightweight expert predictors reach 84.7\% average accuracy
\citep{promoe2024} and 94.7\% on Qwen3-30B-A3B \citep{zhu2025preattention}.
SpecFit reads only the predicted expert sets $\widehat E$, and verification
still executes every expert that the target's router chooses, so route errors
can change only which nodes are selected. With a simulated predictor of 60\%
accuracy, which replaces each routed expert by a random expert outside the
true top-8 with probability 40\%, SpecFit still decodes 2.26$\times$ as fast
as AR on 40 GSM8K questions, against 2.27$\times$ with exact routes, because
the correct predictions still show which draft tokens share experts, while
wrong ones rarely coincide across nodes and add a nearly uniform cost per node.
A linear-head predictor, which maps each draft node's hidden state to logits
over all $48\times128$ experts, has 12.6M parameters, the memory of about three
expert slots, and needs about 25\,MFLOP per node; applied after each drafted
level, it adds an estimated 0.1--0.2\,ms per cycle. SpecFit's own selection,
included in every measurement above, takes a median of 2.82\,ms per cycle on
GSM8K (0.7\% of decode time) and 1.43\,ms on summarization (0.6\%), both small
compared with the 59\,ms by which a SpecFit cycle is shorter than a cached
EAGLE-3 cycle on GSM8K. Training route predictors is left to future work.

\section{Related Work}
\label{sec:related}

\paragraph{Expert offloading.}
Offloading systems keep MoE expert weights in host memory and cache and
prefetch the experts they expect to need
\citep{moeoffload2023,moeinfinity2024,promoe2024}, or compute some experts on
the CPU \citep{fiddler2024}. SpecFit runs on an on-demand
expert cache without prefetching; it changes which tokens create expert
demand, not how it is served.

\paragraph{Draft-tree construction and selection.}
\citet{leviathan2023} and \citet{chen2023} develop draft-and-verify
acceleration with target-distribution-preserving sampling rules under their
assumptions. EAGLE-2 constructs context-dependent trees ranked by draft
confidence, and EAGLE-3 develops the draft model and feature interface
\citep{li2024eagle2,li2025eagle3}. OPT-Tree maximizes the expected acceptance
length \citep{opttree2024}, Sequoia chooses tree size and depth against
measured hardware cost \citep{sequoia2024}, and SpecExec verifies large trees
to amortize weight loading for dense models with offloaded weights
\citep{specexec2024}. SpecFit keeps a fixed EAGLE-3 drafter and candidate generator
and changes only how deep it drafts and which subset is sent to verification,
valuing both by an expert-union cost rather than by size or shape.

\paragraph{Speculative decoding for MoE models.}
SP-MoE prefetches experts predicted during drafting and pipelines their
execution \citep{chen2025spmoe}. MoE-Spec limits the experts loaded during
verification and thereby alters the target computation
\citep{mcdanel2026moespec}, whereas SpecFit executes all routed experts for
its selected tokens. DraftExpert combines a trained resident draft expert
with expansion-aware truncation and prefetching for offloaded inference
\citep{han2026draftexpert}. SpecFit instead evaluates a complete-subtree
reward/cost ratio over a bounded feasible family, with cardinality included
in the search, and trains no model component.

\section{Conclusion}

Under expert offloading, speculative verification in MoE models pays for every
expert that the verified tree activates. SpecFit therefore selects each
cycle's verification subtree and drafting depth by the ratio of a draft-probability reward to an
expert-union cost, solved exactly over ancestor-closed subtrees, without
changing the drafter, the router, or expert execution. With Qwen3-30B-A3B, it
verifies fewer draft nodes when their expert cost outweighs their expected
tokens. On summarization, where EAGLE-3 accepts short draft paths, this raises
decoding throughput by 21.6\% over EAGLE-3 with the same expert cache. On
GSM8K, where EAGLE-3 accepts long paths, each cycle is still 13.8\% shorter
and the gain is 7.0\%, to 2.29$\times$ that of AR decoding without the cache.
As the expert cache shrinks from 48 to 12 slots per layer, the summarization
gain grows to 29.2\%, and on an RTX 4090, whose host link is slower, the
GSM8K gain is 10.8\%.

\section*{AI use statement}
We used AI assistants for software development, experiment inspection,
mathematical exposition, literature search, and drafting of the manuscript.
The authors reviewed all AI-assisted content and take full responsibility for
the claims, references, and results in this paper.

\section*{Ethics statement}
This work speeds up decoding of existing MoE language models whose experts
are offloaded to host memory. It uses a publicly released target model,
drafter, and datasets, involves no human subjects, and collects no new data.
SpecFit changes only which draft tokens are sent to verification, and the
target model verifies them unmodified, so the method adds no capability or
risk beyond those of the model it accelerates.

\section*{Reproducibility statement}
Section~\ref{sec:setup} and Appendix~\ref{app:calibration} give the model,
the workloads, the expert cache configuration, the calibration protocol, and
the fitted cost coefficients. Section~\ref{sec:method} and
Algorithm~\ref{alg:selection} specify the selection and the drafting-depth
rule, and Appendix~\ref{app:theory} states the assumptions and gives complete
proofs of both propositions and both lemmas. The supplementary material
contains the source code, with the configuration files and scripts that run
the calibration and the comparisons in Section~\ref{sec:experiments}.

\bibliography{references}
\bibliographystyle{iclr2027_conference}

\newpage

\appendix

\section{Selection Theory and Proofs}
\label{app:theory}

\subsection{Throughput of a Selection Rule}

A cycle state comprises the pool with its probabilities, routes, and legal
family $\mathcal A_K$, and a rule $\pi$ selects $S_\pi\in\mathcal A_K$ from the
current state. Suppose that the states of successive cycles are independent
and identically distributed and independent of earlier selections. If
$\widehat R$ and $\widehat T$ equal the expected committed tokens and cycle
time given the state, the renewal--reward theorem gives the long-run
throughput $\Theta(\pi)=\mathbb E[\widehat R(S_\pi)]/\mathbb E[\widehat T(S_\pi)]$;
otherwise $\Theta(\pi)$ is the throughput that the proxies predict. Let
\begin{equation}
h(\lambda)=\mathbb E\Bigl[\max_{S\in\mathcal A_K}\bigl(\widehat R(S)-\lambda\widehat T(S)\bigr)\Bigr].
\label{eq:parametric}
\end{equation}
The intercepts are positive, so $\widehat T$ is bounded below by a positive
constant, and $h$ is continuous and strictly decreasing, with $h(0)\ge1$ and
$h(\lambda)\to-\infty$; let $\lambda^\ast$ be its root. Every rule satisfies
$\mathbb E[\widehat R(S_\pi)-\lambda^\ast\widehat T(S_\pi)]\le h(\lambda^\ast)=0$,
so $\Theta(\pi)\le\lambda^\ast$, and the rule selecting $S^\ast$ attains
equality. Hence $\lambda^\ast=\max_\pi\Theta(\pi)$ \citep{dinkelbach1967}.

\begin{lemma}
\label{lem:monotone}
In any cycle, let $S_\lambda$ maximize $\widehat R(S)-\lambda\widehat T(S)$
over $\mathcal A_K$. Then
$(\rho-\lambda)\bigl(\widehat T(S_\lambda)-\widehat T(\widehat S)\bigr)\ge0$.
In particular, $\widehat T(S^\ast)\le\widehat T(\widehat S)$ when
$\rho<\lambda^\ast$, and $\widehat T(S^\ast)\ge\widehat T(\widehat S)$ when
$\rho>\lambda^\ast$.
\end{lemma}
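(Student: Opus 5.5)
The plan is to compare the two optimality conditions within a single cycle. Write $\widehat S$ for the ratio maximizer, with $\rho=\widehat R(\widehat S)/\widehat T(\widehat S)$, and $S_\lambda$ for the maximizer of $\widehat R-\lambda\widehat T$. Both are selected from the same finite family $\mathcal A_K$. Each is therefore feasible for the other's problem, which gives two inequalities.

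First, since $\widehat S$ maximizes the ratio and $\widehat T>0$ (the intercepts are positive), we have $\widehat R(S_\lambda)\le\rho\,\widehat T(S_\lambda)$, while $\widehat R(\widehat S)=\rho\,\widehat T(\widehat S)$ by definition. Second, since $S_\lambda$ maximizes the parametric objective,
\[
\widehat R(S_\lambda)-\lambda\widehat T(S_\lambda)\ge\widehat R(\widehat S)-\lambda\widehat T(\widehat S)=(\rho-\lambda)\,\widehat T(\widehat S).
\]
Bounding the left-hand side above by $(\rho-\lambda)\widehat T(S_\lambda)$ using the first inequality gives
\[
(\rho-\lambda)\,\widehat T(S_\lambda)\ge(\rho-\lambda)\,\widehat T(\widehat S),
\]
which is exactly $(\rho-\lambda)\bigl(\widehat T(S_\lambda)-\widehat T(\widehat S)\bigr)\ge0$. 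This is the main claim, and it needs only two lines of algebra.

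The ``in particular'' statements follow by taking $\lambda=\lambda^\ast$, so that $S_{\lambda^\ast}=S^\ast$, and dividing by the sign of $\rho-\lambda^\ast$. If $\rho<\lambda^\ast$, dividing by the negative factor reverses the inequality and gives $\widehat T(S^\ast)\le\widehat T(\widehat S)$. If $\rho>\lambda^\ast$, it gives $\widehat T(S^\ast)\ge\widehat T(\widehat S)$. When $\rho=\lambda^\ast$ nothing is claimed.

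I do not expect a real obstacle. The only step that needs care is the substitution $\widehat R(S_\lambda)\le\rho\widehat T(S_\lambda)$: it requires $\widehat T(S_\lambda)>0$, which the positive intercepts guarantee, and it must hold for every legal $S$, including $S=\emptyset$. Ties in either argmax do not matter, because the inequality holds for any maximizer, so the paper's tie-breaking rule plays no role here.
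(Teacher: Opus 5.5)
Your proof is correct and is essentially the paper's argument. The paper also combines the fact that $\widehat S$ maximizes $\widehat R-\rho\widehat T$ (zero at $\widehat S$, nonpositive elsewhere) with the optimality of $S_\lambda$ for $\widehat R-\lambda\widehat T$, adding the two inequalities where you chain them, and then specializes to $\lambda=\lambda^\ast$ exactly as you do.
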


\begin{proof}
Since $\rho$ is the largest ratio, $\widehat R(S)-\rho\widehat T(S)\le0$ for
every $S$, with equality at $\widehat S$, so $\widehat S$ maximizes
$\widehat R-\rho\widehat T$. Adding
$\widehat R(S_\lambda)-\lambda\widehat T(S_\lambda)\ge\widehat R(\widehat S)-\lambda\widehat T(\widehat S)$
and
$\widehat R(\widehat S)-\rho\widehat T(\widehat S)\ge\widehat R(S_\lambda)-\rho\widehat T(S_\lambda)$
gives the claim.
\end{proof}

\restatethroughput*

\begin{proof}
The lower bound holds because $\lambda^\ast$ is the largest predicted
throughput. Since $\widehat R(\widehat S)=\rho\widehat T(\widehat S)$,
$\lambda^\ast-\Theta=-\mathbb E[(\rho-\lambda^\ast)\widehat T(\widehat S)]/\mathbb E[\widehat T(\widehat S)]$.
Since $\widehat R(S^\ast)\le\rho\widehat T(S^\ast)$,
$0=h(\lambda^\ast)=\mathbb E[\widehat R(S^\ast)-\lambda^\ast\widehat T(S^\ast)]\le\mathbb E[(\rho-\lambda^\ast)\widehat T(S^\ast)]$.
Adding this nonnegative quantity to the numerator gives
Equation~\ref{eq:margin}, and Lemma~\ref{lem:monotone} with
$\lambda=\lambda^\ast$ gives the sign of each term. If $\rho$ is the same in
every cycle, $\widehat R(S)\le\rho\widehat T(S)$ holds for every selection,
so $\lambda^\ast=\rho=\Theta$.
\end{proof}

\paragraph{Expert demand and $\beta$.}
Fix the intercepts, and let $\widehat S_1$ and $\widehat S_2$ be the
selections under $\beta_1<\beta_2$. Write the inverse ratio as
$\widehat T(S)/\widehat R(S)=x(S)+\beta y(S)$ with
$x(S)=b_{|S|}/\widehat R(S)$ and $y(S)=F(S)/\widehat R(S)$. Each selection
minimizes the inverse ratio under its own coefficient, so
$x(\widehat S_1)+\beta_1y(\widehat S_1)\le x(\widehat S_2)+\beta_1y(\widehat S_2)$
and
$x(\widehat S_2)+\beta_2y(\widehat S_2)\le x(\widehat S_1)+\beta_2y(\widehat S_1)$.
Adding them gives $(\beta_2-\beta_1)\bigl(y(\widehat S_2)-y(\widehat S_1)\bigr)\le0$,
so $F(\widehat S_2)/\widehat R(\widehat S_2)\le F(\widehat S_1)/\widehat R(\widehat S_1)$.

\subsection{Adaptive Depth}

\begin{lemma}
\label{lem:depth}
Let $\widehat S_d$ be the selection over $V_{\le d}$, with $\widehat S_0$ the
root-only choice. The legal families are nested in $d$, so the largest ratio
is nondecreasing in $d$. Adaptive depth stops at the first level $d$ with
$\widehat S_d=\widehat S_{d-1}$, and its selection equals the full-pool
selection $\widehat S$ unless $\widehat S$ contains a node deeper than $d$.
\end{lemma}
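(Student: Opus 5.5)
Three parts need proof: monotonicity of the largest ratio, the characterization of the stopping level, and agreement with the full-pool selection.

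\textbf{Nested families.} Since $V_{\le d-1}\subseteq V_{\le d}$ and legality (ancestor-closure, $|S|\le K$) does not depend on the pool beyond membership, we have $\mathcal A_K(V_{\le d-1})\subseteq\mathcal A_K(V_{\le d})$. A maximum over a larger set can only grow, so $\rho_d=\max_{S\in\mathcal A_K(V_{\le d})}\widehat R(S)/\widehat T(S)$ is nondecreasing in $d$. One caveat: $K=\min(7,|V|)$ depends on the pool. I will take $K$ fixed at the full-pool value, or note that $\min(7,\cdot)$ is monotone in the pool, so nesting still holds.

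\textbf{Stopping level.} The rule stops after level $d$ when $\widehat S_d$ contains no level-$d$ node, i.e.\ $\widehat S_d\subseteq V_{\le d-1}$, so $\widehat S_d\in\mathcal A_K(V_{\le d-1})$. Because $\widehat S_{d-1}$ maximizes over that smaller family, which sits inside the larger one, $\widehat S_d$ is also a maximizer over $V_{\le d-1}$, and conversely $\widehat S_{d-1}$ attains $\rho_d$. Here I use that the tie-break (larger reward, smaller union, fewer nodes, lexicographic IDs) is a fixed total preorder on subtrees, independent of the pool, refined to a unique choice. The best element under a fixed total order over $\mathcal A_K(V_{\le d})$ that already lies in the subfamily $\mathcal A_K(V_{\le d-1})$ is the best element of the subfamily. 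Hence ``no level-$d$ node'' is equivalent to $\widehat S_d=\widehat S_{d-1}$: if $\widehat S_d$ has no level-$d$ node, the argument just given shows $\widehat S_d=\widehat S_{d-1}$; if $\widehat S_d=\widehat S_{d-1}$, it has no level-$d$ node trivially. So the first stopping level is the first $d$ with $\widehat S_d=\widehat S_{d-1}$, or the maximum depth.

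\textbf{Agreement with the full pool.} Suppose $\widehat S\subseteq V_{\le d}$. Then $\widehat S\in\mathcal A_K(V_{\le d})\subseteq\mathcal A_K(V)$, and by the same restriction argument, the unique best element of the large family lying in the subfamily, $\widehat S_d=\widehat S$. The main point to check carefully is that the tie-breaking criteria (reward, $F$, $|S|$, IDs) are all intrinsic to $S$ and do not depend on which pool prefix is being searched, so the induced choice is consistent under restriction. With that fixed, all three claims follow from the single ``restriction of a unique argmax'' observation. The structural remark about deeper nodes adding no more reward than their ancestors is not needed for the lemma as stated.
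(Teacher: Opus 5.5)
Your proposal is correct and follows the same route as the paper's proof. Both first nest the legal families, noting that the $\min(7,\cdot)$ cap is harmless. Both then use the fact that a pool-independent tie rule makes the unique maximizer of a family also the unique maximizer of any subfamily that contains it, and this one fact gives both the stopping characterization $\widehat S_d=\widehat S_{d-1}$ and the equality $\widehat S_d=\widehat S$ whenever $\widehat S\subseteq V_{\le d}$.
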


\begin{proof}
A legal subtree of $V_{\le d}$ has at most $\min(7,|V_{\le d}|)$ nodes and is
therefore a legal subtree of $V_{\le d'}$ for every $d'>d$, which gives the
nesting and the monotone largest ratio. The tie rule orders all subtrees
independently of the pool, so a subtree that is optimal in a family is optimal
in every subfamily containing it. If $\widehat S_d$ has no level-$d$ node, it
lies in the family of $V_{\le d-1}$ and hence equals $\widehat S_{d-1}$;
conversely, $\widehat S_{d-1}$ has no level-$d$ node. Likewise, if
$\widehat S\subseteq V_{\le d}$, then $\widehat S_d=\widehat S$.
\end{proof}

In cycles whose largest ratio is below $\lambda^\ast$, the throughput-optimal
rule selects no more predicted time than Equation~\ref{eq:objective}
(Lemma~\ref{lem:monotone}), so a stopped tree with less predicted time than
$\widehat S$ departs from Equation~\ref{eq:objective} in the same direction as
that rule.

\subsection{Complexity and Exact Search}

\restatehardness*

\begin{proof}
A subtree is a certificate checkable in polynomial time, so the problem is in
NP. We reduce from CLIQUE \citep{karp1972}: given a graph with $n$ vertices,
$m$ edges, and an integer $2\le r\le n$, decide whether some $r$ vertices are
pairwise adjacent. Let $k=r(r-1)/2$ and $p=1/(m+1)$. The pool has one node per
edge, each a child of the root. The single MoE layer has one expert per
vertex; the node of edge $\{u,w\}$ routes to $\{u,w\}$, and the root routes to
no expert. Set $p_v=p$ for every node, $b_j=b>0$ for every $j$, $K=k$,
$0<\beta<bp/\bigl(n(1+(k-1)p)\bigr)$, and threshold
$\theta=(1+kp)/(b+\beta r)$. Then $F(S)$ is the number of vertices covered by
the edges in $S$, and the edges of an $r$-clique reach $\theta$. Conversely, a
subtree with fewer than $k$ nodes has ratio at most $(1+(k-1)p)/b<\theta$ by
the choice of $\beta$, so a subtree reaching $\theta$ has $k$ edges covering
at most $r$ vertices. Since $q$ vertices span at most $q(q-1)/2$ edges, these
edges cover exactly $r$ vertices and are all pairs among them, an $r$-clique.

Without the size limit, let $b_j=b_0+aj$ with $b_0>0$ and $a\ge0$. Apart from
the root's experts, $F(S)$ counts the pairs $(\ell,e)$ such that some $v\in S$
routes to expert $e$ in layer $\ell$. For fixed $\lambda\ge0$, maximizing
$\widehat R(S)-\lambda\widehat T(S)$ over ancestor-closed $S$ is then a
maximum-weight closure problem \citep{picard1976}. Its network has a vertex
for each node and each pair, an arc of capacity $p_v$ from the source to each
node, arcs of capacity $\lambda a$ from each node and $\lambda\beta$ from each
pair to the sink, and infinite arcs from each node to its parent and to its
pairs; the source side of a minimum cut is an optimal subtree with its pairs.
Source capacities are constant and sink capacities are nondecreasing in
$\lambda$, which after reversing all arcs is the form handled by
\citet{gallo1989}: one parametric maximum flow yields nested minimum cuts for
all $\lambda$, at most as many as the network has vertices. At $\lambda$ equal
to the largest ratio, the maximum of $\widehat R-\lambda\widehat T$ is zero and
is attained only by subtrees of that ratio, so the best ratio among the nested
cuts is the largest ratio.
\end{proof}

\paragraph{Exact search.}
Algorithm~\ref{alg:selection} gives the search of
Section~\ref{sec:selection}. Pool node IDs put parents before descendants, and
the depth-first traversal adds nodes in increasing ID order, each only when
its parent is present, so it reaches every subtree in $\mathcal A_K$ exactly
once.

\begin{algorithm}[htbp]
\caption{SpecFit selector}
\label{alg:selection}
\begin{algorithmic}[1]
\REQUIRE Pool $V$, root $s$, atlas $\widehat E$, frozen coefficients $(b,\beta)$
\STATE Read cumulative draft probabilities $p_v$ and encode expert bitmaps $B_v$
\STATE Initialize the incumbent to the root-only choice
\FORALL{ancestor-closed $S\subseteq V$, $|S|\leq\min(7,|V|)$, in depth-first order}
  \STATE Carry $B(S)=B_s\mathbin{\mathrm{OR}}\bigvee_{v\in S}B_v$ from the parent state
  \STATE $r\gets 1+\sum_{v\in S}p_v$; $f\gets\operatorname{popcount}(B(S))$
  \STATE $t\gets b_{|S|}+\beta f$
  \STATE Update incumbent by $(r/t,r,-f,-|S|)$ and the node-ID tie rule
\ENDFOR
\RETURN Incumbent subtree
\end{algorithmic}
\end{algorithm}

\subsection{Evaluation on Recorded Pools}

We evaluate these quantities on the candidate pools that SpecFit drafted in
every cycle of its runs with the expert cache (Appendix~\ref{app:calibration}),
with the coefficients fitted with the cache: 4,389 cycles on the 40 summarization articles
and 4,407 cycles on the 80 GSM8K questions. The legal family of a cycle is
that of its drafted pool, expectations are taken over the recorded cycles with
the trajectory held fixed, and $\lambda^\ast$ is computed by Dinkelbach's
iteration (Table~\ref{tab:theory}). The optimality gap of
Equation~\ref{eq:objective} is below 1\% of $\lambda^\ast$ and lies within the
bound of Equation~\ref{eq:margin}, although the two rules select different
trees in about a quarter of the cycles. With the recorded acceptances in place
of $\widehat R$, $S^\ast$ commits 0.5\% and 0.7\% more tokens per unit of
predicted time than $\widehat S$. Doubling $\beta$ lowers both the expert
demand per committed token and the number of selected nodes. Drafting stops
before level 5 in 85.8\% of the summarization cycles and 33.7\% of the GSM8K
cycles.

\begin{table}[htbp]
\centering
\small
\caption{Selection theory evaluated on the candidate pools drafted by
SpecFit, with the coefficients fitted with the cache. Percentages of $\lambda^\ast$ refer to
the predicted throughput.}
\label{tab:theory}
\begin{tabular}{lcc}
\toprule
Quantity & Summarization & GSM8K \\
\midrule
Cycles & 4,389 & 4,407 \\
$\lambda^\ast-\Theta$ (\% of $\lambda^\ast$) & 0.74 & 0.95 \\
Bound of Equation~\ref{eq:margin} (\% of $\lambda^\ast$) & 1.38 & 1.75 \\
Cycles with $S^\ast\neq\widehat S$ (\%) & 26.6 & 24.5 \\
Coefficient of variation of $\rho$ & 0.31 & 0.25 \\
Gain of $S^\ast$ with recorded acceptances (\%) & 0.53 & 0.69 \\
Mean $F(\widehat S)/\widehat R(\widehat S)$ at $\beta$ / $2\beta$ & 365.5 / 351.5 & 305.7 / 298.0 \\
Mean selected nodes at $\beta$ / $2\beta$ & 2.88 / 2.35 & 5.09 / 4.54 \\
\midrule
Mean drafted levels & 2.86 & 4.39 \\
Cycles stopped before level 5 (\%) & 85.8 & 33.7 \\
\bottomrule
\end{tabular}
\end{table}

\section{Experimental Details}
\label{app:calibration}

\paragraph{Model and data.}
The target model has 48 MoE layers with 128 experts per layer and top-8
routing. The summarization workload consists of the 40 even-numbered of the 80
CNN/DailyMail articles in Spec-Bench, with inputs of 156--1,496 tokens and a
limit of 2048 input plus output tokens. The GSM8K workload is the 80-question
subset of Spec-Bench, with a limit of 512 input plus output tokens.

\paragraph{Backend and baselines.}
The offload backend replaces the expert modules of the Hugging Face
Transformers Qwen3-MoE model. With the expert cache, missing experts are
loaded on demand, with at most two copies in flight. A configuration without
the expert cache clears the cache before every target forward, including
prefill, and the clearing is timed; an expert shared by tokens of one forward
is still loaded once, and the KV cache is kept. EAGLE-3 drafts the full
five-level pool with the same generator as SpecFit and verifies all nodes
when the pool has fewer than seven nodes near the generation limit.

\paragraph{Timing and aggregation.}
Each method's request stream begins with an empty expert cache, whose
contents carry over between questions, while KV and draft state are new for
every question. Measured request time includes prefill, drafting, atlas
conversion, selection, target verification, commit, and required
synchronization; model loading, atlas acquisition, and
post-request scoring are excluded. For output count $N_i$,
request time $t_i^{\rm req}$, and prefill time $t_i^{\rm pre}$, the throughput
measures are
\begin{equation}
\mathrm{TPS}_{\rm dec}=\frac{\sum_i(N_i-1)}{\sum_i(t_i^{\rm req}-t_i^{\rm pre})},
\qquad
\mathrm{TPS}_{\rm req}=\frac{\sum_iN_i}{\sum_i t_i^{\rm req}}.
\label{eq:tps}
\end{equation}
The first output token is attributed to prefill. Draft nodes, committed
tokens, and ms/cycle are means per target verification cycle; an AR cycle is
one decode forward and commits one token. H2D/token counts expert-weight
transfers per output token, including prefill.

\paragraph{Calibration.}
The probes use the states of GSM8K questions 401--403 at cycle boundaries 0, 4,
and 8. In
each of these nine states, two probe trees are measured for each size
$k=0,\ldots,7$, a probability-prefix tree and a minimum-expert-union tree,
twice each and without clearing the expert cache. Averaging the repeats gives
$9\times8\times2=144$ points $(k_i,F_i,y_i)$, where $y_i$ is cycle time without
selector time. The fit minimizes $\sum_i(y_i-b_{k_i}-\beta F_i)^2$ subject to
$\beta,b_0,\ldots,b_7\geq0$. The same probe trees are also timed with the
expert cache cleared before every target forward, and SpecFit without the
cache uses the fit to these 144 points. The RTX 4090 of
Section~\ref{sec:device} repeats this protocol with 32 expert slots per layer.
In milliseconds, $\beta=0.210715513$ with the cache and $0.408001022$ without
it on the A100, and $0.206037121$ and $0.781593955$ on the RTX 4090; the
intercepts $b_k$ are listed below.
\begin{center}
\small
\setlength{\tabcolsep}{3pt}
\begin{tabular}{llrrrrrrrr}
\toprule
& $k$ & 0 & 1 & 2 & 3 & 4 & 5 & 6 & 7 \\
\midrule
A100 & With cache & 142.214 & 147.532 & 150.522 & 152.269 & 152.743 & 163.204 & 155.175 & 156.068 \\
& Without cache & 122.702 & 117.975 & 118.338 & 116.389 & 115.309 & 114.770 & 113.887 & 113.671 \\
\midrule
RTX 4090 & With cache & 245.640 & 293.125 & 319.436 & 343.284 & 367.087 & 386.660 & 401.359 & 416.368 \\
& Without cache & 151.724 & 158.827 & 164.033 & 164.883 & 169.941 & 171.869 & 173.418 & 175.833 \\
\bottomrule
\end{tabular}
\end{center}
The intercepts are fitted independently and need not increase with $k$.
Reproduction uses the full-precision frozen coefficients.

\paragraph{Measurement runs.}
On GSM8K, AR and EAGLE-3 with the expert cache come from the main run on the
80 questions, and SpecFit with the cache from a second run; a third run
measures AR and EAGLE-3 without the expert cache, and a fourth SpecFit
without it. On summarization, AR without the cache and EAGLE-3 and SpecFit
with the cache come from one run on the 40 articles, AR with the cache and
EAGLE-3 without it from a second run, and SpecFit without the cache from a
third run; the slot sweep runs EAGLE-3 and SpecFit once at 24 and once at 12
slots. On the RTX 4090, SpecFit with the cache comes from one run on the 40
questions, AR without the cache and EAGLE-3 with and without it from a second
run, SpecFit without the cache from a third run, and AR with the cache from a
fourth. Every configuration has 80 requests on GSM8K on the A100 and 40
otherwise.

\end{document}